\newlength{\ind}
\newtheoremstyle{newthmstyle}{3pt}{3pt}{\itshape}{\ind}{\bfseries}{.}{.5em}{}
\newtheoremstyle{newdefstyle}{3pt}{3pt}{}{\ind}{\bfseries}{.}{.5em}{}
\theoremstyle{newthmstyle}
\newtheorem{thm}{Theorem}
\newtheorem{cor}{Corollary}
\theoremstyle{newdefstyle}
\newtheorem{defn}{Definition}
\renewenvironment{proof}{P r o o f.}{\qed}
\renewenvironment{itemize}{\begin{list}{$\bullet$}{
\setlength{\parsep}{0mm}
\setlength{\itemsep}{0mm}
\setlength{\topsep}{0mm}
\setlength{\itemindent}{0mm}
\setlength{\labelsep}{2mm}
\setlength{\labelwidth}{4mm}
\setlength{\leftmargin}{\labelwidth}
\addtolength{\leftmargin}{\parindent}
}}{\end{list}}
\newcounter{en}
\def\R{\mathbb R}
\begin{document}

\begin{center}
\end{center}

\


\

{\footnotesize \noindent {UDC 519.854}}

{\footnotesize \noindent {MSC 90C27, 90C29, 90C59}}

\

\noindent {\bf Construction and reduction of the Pareto set \\ in asymmetric travelling salesman problem with two criteria$^*$}

\

\noindent \emph{A. O. Zakharov $^1$, Yu. V. Kovalenko $^{2}$}
{
\renewcommand{\thefootnote}{}\footnote{\:
$^*$ This work was supported by Russian Foundation for Basic Research (project N 17-07-00371)
and by the Ministry of Science and Education of the Russian Federation (under the 5-100 Excellence Programme).}
\addtocounter{footnote}{-1}
}

\noindent {\small $^1$ St. Petersburg State University, 7-9, Universitetskaya nab., \\
199034, St. Petersburg, Russian Federation \\
$^2$ Novosibirsk State University, 1, Pirogova ul.,
630090, Novosibirsk, Russian Federation
}

\

\

{\footnotesize \noindent We consider the bicriteria asymmetric travelling salesman problem (bi-ATSP).
Optimal solution to a multicriteria problem is usually supposed to be the Pareto set,
which is rather wide in real-world problems.
For the first time we apply to the bi-ATSP  the axiomatic approach of the Pareto set reduction proposed by V.~Noghin.
We identify series of ``quanta of information'' that guarantee the reduction of the Pareto set for particular cases of the bi-ATSP.
An approximation of the Pareto set to the bi-ATSP is constructed by a new multi-objective genetic algorithm.
The experimental evaluation carried out in this paper
shows the degree of reduction of the Pareto set approximation for various ``quanta of information''
and various structures of the bi-ATSP instances generated randomly or from TSPLIB problems.}

{\footnotesize \noindent \emph{Keywords:}
reduction of the Pareto set, decision maker preferences, multi-objective genetic algorithm, computational experiment.}

\

\textbf{1. Introduction.} The asymmetric travelling salesman problem (ATSP) is one of the most popular problems in combinatorial optimization~[1].
Given a complete directed graph, where each arc is associated
with a positive weight, we search for a circuit visiting every vertex of the graph exactly once and minimizing the total weight.
In this paper we consider the bicriteria ATSP (bi-ATSP), which is a special case
of the multicriteria ATSP~[2], where an arc is associated to a couple
of weights.

The best possible solution to a multicriteria optimization problem (MOP) is usually supposed
to be the Pareto set~[2, 3],
which is rather wide in real-world problems, and difficulties arise in choosing the final variant.
For that reason numerous methods introduce some mechanism to treat the MOP:
utility function, rule, or binary relation, so that methods are aimed at finding
an ``optimal'' solution with respect to this mechanism.
However, some approaches do not guarantee that the obtained solution will be from the Pareto set.
State-of-the-art methods are the following~[4]:
multiattribute utility theory, outranking approaches, verbal decision analysis,
various iterative procedures with man-machine interface, etc.
We investigate the axiomatic approach of the Pareto set reduction
proposed in monograph~[5] which has an alternative idea.
Here the author introduced an additional information about the decision maker (DM) preferences in terms of
the so-called ``quantum of information''.
The method shows how to construct a new bound of the optimal choice, which is narrower than the Pareto set.
Practical applications of the approach could be found in works~[6, 7].

As far as we know, the axiomatic approach of the Pareto set reduction has not been widely investigated
in the case of discrete optimization problems,
and an experimental evaluation has not been carried out on real-world instances.
Thus, we apply this approach to the bi-ATSP in order to estimate its effectiveness,
i.e. the degree of the Pareto set reduction and how it depends on the parameters of the information about DM's preferences.
We identify series of ``quanta of information'' that guarantee the reduction of the Pareto set
 with particular structures. The bi-ATSP instances with such structures are presented.
\looseness=-1

Originally the reduction is constructed with respect the Pareto set of the considered problem.
Due to the strongly NP-hardness of the bi-ATSP we take an approximation of the Pareto set in computational experiments.
The ATSP cannot be approximated
with any constant or exponential approximation factor already with a single objective function~[1].
Moreover, in~[8], the non-approximability bounds were obtained for the multicriteria ATSP with weights 1 and 2.
The results are based on the non-existence of a small size approximating set.
Therefore, metaheuristics, in particular multi-objective evolutionary algorithms (MOEAs),
are appropriate to approximate the Pareto set of the bi-ATSP.

Numerous MOEAs have been proposed to MOPs
(see~[9--14]).
There are three main classes of approaches to develop MOEAs, which are known as
Pareto-dominance based (see e.g. SPEA2~[14],  NSGA-II~[9, 10], NSGA-III~[12]),
decomposition based (see e.g. MOEA/D~[11]) and indicator based approaches (see e.g. SIBEA~[13]).
NSGA-II~[10] has one of the best results in the literature on
multi-objective genetic algorithms (MOGAs) for the MOPs with two or three
objectives.
In article~[9], a fast implementation of a steady-state version of NSGA-II is proposed for two dimensions.

In~[15, 16] NSGA-II was adopted to the multicriteria symmetric travelling salesman problem, and
the experimental evaluation was performed on symmetric instances from TSPLIB library~[17].
As far as we know, there is no an adaptation of NSGA-II to the problem, where arc weights are non-symmetric.
In this paper we develop a MOGA based on NSGA-II to solve the bi-ATSP
using adjacency-based representation of solutions. The previous study was communicated in~[18].
The MOGA from~[18] applies a problem-specific heuristic to generate the initial population
and uses a mutation operator, which performs a random jump within 3-opt neighborhood.
In comparison to the MOGA from~[18]
proposed two new crossover operators, where the Pareto-dominance is involved.
A computational experiment is carried out on instances  generated randomly or from ATSP-instances of TSPLIB library.
The preliminary results of the experiment indicate that our MOGA demonstrates competitive results.
The main experimental evaluation shows the degree of the reduction of the Pareto set approximation
 for various structures of the ATSP instances in the case of one and two ``quanta of information''.
Note that early we experimentally investigate the reduction of the Pareto set approximation only in the case of one ``quantum of information'' in~[18].
\looseness=-1

\textbf{2. Problem statement.}
In the $m$-criteria travelling salesman problem~[1] (m-TSP), we are given a complete
weighted graph $G = (V, E)$,
with $V=\{v_1,\dots,v_n\}$ being the set of vertices (nodes), and
set $E$ contains arcs (or edges) between every pair of vertices in $V$.
A weight $d(e)$ assigns to each arc (edge)~$e$ a vector~$(d_1(e),\dots,d_m(e))$ of length $m$,
where each element $d_j(e),\ j=1,\dots,m,$ corresponds to a certain measure of arc (edge)~$e$
like travel distance, travel time, expenses,
the number of flight changes between corresponding vertices (nodes).
The aim is to find  ``minimal'' Hamiltonian circuit(s) of the graph, i.e. closed
tour(s) visiting each of the $n$ vertices of $G$ exactly once. Here ``minimal'' refers to
the notion of Pareto relation.
If graph $G$ is undirected, we have Symmetric m-TSP (m-STSP).
If $G$ is a directed graph, then we have Asymmetric m-TSP (m-ATSP).

The total weight of a tour $C$ is a vector $D(C)=(D_1(C),\dots,D_m(C))$,
where $D_j(C) =\sum_{e\in C} d_j(e),\ j=1,\dots,m$.
We say that one solution (tour) $C^\ast$ dominates another solution $C$
if the inequality $D(C^\ast) \leq D(C)$ holds. The notation $D(C^\ast) \leq D(C)$ means
that $D(C^\ast) \neq D(C)$ and $D_i(C^\ast) \leqslant D_i(C)$
for all $i \in I$, where $I = \{1, 2, \ldots, m \}$. This relation $\leq$ is also called {\it the Pareto relation}.
We denote by $\mathcal{C}$ all possible $(n-1)!$ tours of graph $G$.
A set of non-dominated solutions is called {\it  the set of pareto-optimal solutions}~[2, 3]
$P_D(\mathcal{C})=\{ C \in \mathcal{C} \mid \nexists C^\ast \in \mathcal{C}: D(C^\ast)~\leq~D(C)~\}.$
In discrete problems, the set of pareto-optimal solutions is non-empty if the set of feasible solutions is non-empty,
which is true for the m-TSP.
If we denote $\mathcal{D} = D(\mathcal{C})$, then the Pareto set is defined as
$P(\mathcal{D})=\{ y \in \mathcal{D} \mid \nexists y^\ast \in \mathcal{D}: y^\ast~\leq~y~\}.$
We assume that the Pareto set is specified except for a collection of equivalence classes,
generated by equivalence relation $C' \sim C''$ iff $D(C') = D(C'')$.

In this paper we investigate the issue of the Pareto set reduction for the bi-ATSP.

\textbf{3. Pareto set reduction.} Axiomatic approach of the Pareto set reduction is applied to both discrete and continuous problems.
Due to consideration of the multicriteria ATSP we formulate the basic concepts and results of the approach in terms of notations
introduced in section~2.
Further, we investigate properties of the bi-ATSP in the scope of the Pareto set reduction.

\textbf{\textit{3.1. Main approach.}}
According to~[5] we consider the extended multicriteria problem $<\mathcal{C}, D, \prec>$:
\begin{itemize}
\item a set of all possible $(n-1)!$ tours $\mathcal{C}$;
\item a vector criterion $D = (D_1, D_2, \ldots,  D_m)$ defined on set $\mathcal{C}$;
\item an asymmetric binary preference relation of the DM $\prec$ defined on set $\mathcal{D}$.
\end{itemize}
The notation $D(C') \prec D(C'')$ means that the DM prefers the solution $C'$ to $C''$.

Binary relation $\prec$ satisfies some axioms of the so-called ``reasonable'' \ choice, according which
it is irreflexive, transitive,
invariant with respect to a linear positive transformation and compatible with each criteria $D_1, D_2, \ldots, D_m$.
The compatibility means that the DM is interested in
decreasing value of each criterion, when values of other criteria are constant.
Also, if for some
feasible solutions $C', \ C'' \in \mathcal{C}$
the relation $D(C') \prec D(C'')$ holds, then tour $C''$
does not belong to the optimal choice within the whole set $\mathcal{C}$.

Author~[5] established the Edgeworth--Pareto principle:
under axioms of ``reasonable'' choice any set of selected outcomes $Ch(\mathcal{D})$
belongs to the Pareto set $P(\mathcal{D})$.
Here the set of selected outcomes is interpreted as some abstract set
corresponded to the set of tours, that satisfy all hypothetic preferences of the DM.
So, the optimal choice should be done within the Pareto set only
if preference relation $\prec$ fulfills the axioms of ``reasonable'' choice.

In real-life multicriteria problems the Pareto set is rather wide.
For this reason V.~Noghin proposed a specific information on the DM's preference relation $\prec$ to reduce
the Pareto set staying within the set of selected outcomes~[5, 19]:

\begin{defn}
\label{ref_def_i_j}
We say that there exists a {\it ``quantum of information''} about the DM's preference
relation $\prec$ if vector $y' \in \R^m$ such that
$y'_i = - w_i < 0$, $y'_j = w_j > 0$,
$y'_s = 0$ for all $s \in I\setminus \{i, j\}$
satisfies the expression $y' \prec 0_m$.
In such case we will say, that the component of criteria $i$ is more important than the component~$j$
with given positive parameters~$w_i$,~$w_j$.
\end{defn}

Thus, ``quantum of information'' shows, that the DM is ready to compromise
by increasing the criterion $D_j$ by amount $w_j$ for decreasing the criterion $D_i$ by amount $w_i$.
The quantity of relative loss is set by the so-called
{\it coefficient of relative importance} $\theta = w_j / (w_i + w_j)$, therefore $\theta \in (0, \ 1)$.

As mentioned before the relation $\prec$ is invariant with respect to a linear positive transformation.
Hence Definition~\ref{ref_def_i_j} is equivalent to the existence of such vector $y'' \in \R^m$
with components $y''_i = \theta - 1$, $y''_j = \theta$, $y''_s = 0$ for all $s \in I\setminus \{i, j\}$,
that the relation $y'' \prec 0_m$ holds. Further, in experimental study (subsection~5.2)
we consider ``quantum of information'' exactly in terms of coefficient $\theta$.

In~[5] the author established the rule of taking into account ``quantum of information''.
This rule consists in constructing a ``new'' vector criterion using the components of the ``old'' one
and parameters of the information $w_i$, $w_j$. Then one should find the Pareto set of ``new'' multicriteria problem
with the same set of feasible solutions and ``new'' vector criterion.
The obtained set will belong to the Pareto set of the initial problem
and give a narrower upper bound on the optimal choice as a result the Pareto set will be reduced.

The following theorem states the rule of applying ``quantum of information'' and
specifies how to evaluate ``new'' vector criterion upon the ``old'' one.

\begin{thm}[see {[5]}]
\label{th_red_1}
Given a ``quantum of information'' by Definition~\ref{ref_def_i_j},
the inclusions $Ch(\mathcal{D}) \subseteq \hat{P}(\mathcal{D}) \subseteq P(\mathcal{D})$ are valid
for any set of selected outcomes $Ch(\mathcal{D})$.
Here $\hat{P}(\mathcal{D}) = D(P_{\hat{D}}(\mathcal{C}))$ and $P_{\hat{D}}(\mathcal{C})$ is the set of pareto-optimal solutions
with respect to $m$-dimensional vector criterion $\hat{D} = (\hat{D}_1, \ldots, \hat{D}_m)$, where
$\hat{D}_j = w_j D_i + w_i D_j$, $\hat{D}_s = D_s$ for all $s \neq j$.
\end{thm}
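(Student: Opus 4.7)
The plan is to establish the two inclusions $\hat{P}(\mathcal{D}) \subseteq P(\mathcal{D})$ and $Ch(\mathcal{D}) \subseteq \hat{P}(\mathcal{D})$ separately. For the first inclusion I would argue by contraposition: if $C \in P_{\hat D}(\mathcal{C})$ and some $C^\ast \in \mathcal{C}$ satisfied $D(C^\ast) \leq D(C)$, then since $\hat D_s = D_s$ for $s \neq j$ and $\hat D_j = w_j D_i + w_i D_j$ is strictly monotone in both $D_i$ and $D_j$ (as $w_i, w_j > 0$), the coordinatewise comparisons and the strict inequality in at least one coordinate transfer from $D$ to $\hat D$, yielding $\hat D(C^\ast) \leq \hat D(C)$ and contradicting Pareto-optimality of $C$ with respect to $\hat D$.

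For the main inclusion $Ch(\mathcal{D}) \subseteq \hat{P}(\mathcal{D})$, take $C$ with $D(C) \in Ch(\mathcal{D})$ and suppose toward contradiction that some $C^\ast \in \mathcal{C}$ satisfies $\hat D(C^\ast) \leq \hat D(C)$. The target is to derive $D(C^\ast) \prec D(C)$, which contradicts the selection of $C$ under the axioms of reasonable choice. By invariance of $\prec$ under positive linear transformations, this is equivalent to showing $y := D(C^\ast) - D(C) \prec 0_m$. Setting $\hat y := \hat D(C^\ast) - \hat D(C) \leq 0_m$ with $\hat y \neq 0_m$, the key step is the componentwise decomposition
\[
y \;=\; \beta\,q \;-\; \sum_{k \in I} \alpha_k\,e_k,
\]
where $q = -w_i e_i + w_j e_j$ is the quantum vector from Definition~\ref{ref_def_i_j}, each $e_k$ denotes the $k$-th standard basis vector of $\R^m$, and one takes $\beta := -\hat y_i / w_i$, $\alpha_i := 0$, $\alpha_j := -\hat y_j / w_i$, and $\alpha_k := -\hat y_k$ for $k \notin \{i,j\}$. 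Using $\hat y_k = y_k$ for $k \neq j$ together with $\hat y_j = w_j y_i + w_i y_j$, a direct calculation verifies the identity; the inequality $\hat y \leq 0_m$ makes all coefficients non-negative, and $\hat y \neq 0_m$ ensures at least one coefficient is strictly positive.

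The main obstacle is to pass from this decomposition to the conclusion $y \prec 0_m$. The set $N := \{z \in \R^m : z \prec 0_m\}$ contains each $-e_k$ by compatibility of $\prec$ with $D_k$ and contains $q$ by Definition~\ref{ref_def_i_j}. Invariance of $\prec$ under positive linear transformations makes $N$ closed under multiplication by positive scalars; closure under addition follows from invariance and transitivity, since for any $a,b \in N$, invariance applied to $a \prec 0_m$ gives $a+b \prec b$, and transitivity with $b \prec 0_m$ then yields $a+b \prec 0_m$. The decomposition therefore exhibits $y$ as a non-negative combination of elements of $N \cup \{0_m\}$ with at least one strictly positive coefficient, so $y \in N$, i.e.\ $D(C^\ast) \prec D(C)$, which is the desired contradiction.
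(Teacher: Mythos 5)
Your argument is correct, but note that the paper itself offers no proof of this theorem: it is imported verbatim from Noghin's monograph (reference [5]), so there is nothing in the text to compare against line by line. Both halves of your proof check out: the monotonicity argument for $\hat{P}(\mathcal{D}) \subseteq P(\mathcal{D})$ is sound (a strict drop in coordinate $i$, $j$, or any other coordinate survives the passage to $\hat{D}$ because $w_i, w_j > 0$), and your decomposition $y = \beta q - \sum_k \alpha_k e_k$ with $\beta = -\hat{y}_i/w_i$, $\alpha_j = -\hat{y}_j/w_i$, $\alpha_k = -\hat{y}_k$ verifies componentwise, with the closure of $N = \{z : z \prec 0_m\}$ under positive scaling and addition following exactly as you say from invariance and transitivity. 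This is in substance the standard argument behind Theorem~\ref{th_red_1} in [5]: your algebraic decomposition is the dual formulation of the cone picture that the paper itself uses later (in the proof of Theorem~\ref{prop1_crit_theta}), where the quantum vector $y'$ extends the non-positive orthant to the cone $M = \mbox{cone}\{-e^1, \ldots, -e^m, y'\}\setminus\{0_m\}$ and $\hat{D}$-dominance is identified with $M$-dominance.
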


\begin{cor}[see {[5]}]
\label{cl_red}
The result of Theorem~\ref{th_red_1} holds as well as component $j$ of ``new'' vector criterion is evaluated by formula
$\hat{D}_j = \theta D_i + (1-\theta) D_j$.
\end{cor}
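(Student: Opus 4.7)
The plan is to reduce the corollary to Theorem~\ref{th_red_1} by observing that the vector criterion proposed here differs from the one used in Theorem~\ref{th_red_1} only by a positive scalar factor in the $j$-th component. Since $\theta = w_j / (w_i + w_j)$ and $1 - \theta = w_i / (w_i + w_j)$, the new component admits the identity
\begin{equation*}
\theta D_i + (1-\theta) D_j \; = \; \frac{1}{w_i + w_j} \bigl( w_j D_i + w_i D_j \bigr),
\end{equation*}
so that, up to the positive multiplier $1/(w_i + w_j)$, the $j$-th component of $\hat D$ in the corollary coincides with the $j$-th component of $\hat D$ in Theorem~\ref{th_red_1}, while all other components are identical in both formulations.

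Next I would invoke the elementary invariance of the Pareto relation under componentwise positive scaling: for any tours $C', C'' \in \mathcal{C}$ and any $\lambda > 0$, the inequality $\lambda D_j(C') \leqslant \lambda D_j(C'')$ is equivalent to $D_j(C') \leqslant D_j(C'')$, with equality preserved as well, so the componentwise order and the ``$\neq$'' condition defining the Pareto relation $\leq$ on $\mathcal{D}$ (as recalled in section~2) are left intact. Consequently, the set $P_{\hat D}(\mathcal{C})$ computed with $\hat D_j = \theta D_i + (1-\theta) D_j$ coincides with the set $P_{\hat D}(\mathcal{C})$ computed with $\hat D_j = w_j D_i + w_i D_j$, and therefore the images $\hat{P}(\mathcal{D}) = D(P_{\hat D}(\mathcal{C}))$ agree in the two formulations.

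It then suffices to apply Theorem~\ref{th_red_1} to the criterion $\hat D_j = w_j D_i + w_i D_j$ to obtain $Ch(\mathcal{D}) \subseteq \hat{P}(\mathcal{D}) \subseteq P(\mathcal{D})$; by the equality of Pareto sets established in the previous step, the same chain of inclusions holds when $\hat D_j = \theta D_i + (1-\theta) D_j$. There is essentially no real obstacle in this argument; the only point that warrants explicit mention is the positivity of the scaling factor $1/(w_i + w_j)$, which follows from $w_i, w_j > 0$ in Definition~\ref{ref_def_i_j} and ensures that the componentwise scaling is indeed order-preserving.
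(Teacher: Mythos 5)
Your proposal is correct: normalizing by the positive factor $1/(w_i+w_j)$ turns $w_j D_i + w_i D_j$ into $\theta D_i + (1-\theta)D_j$, and positive scaling of a single component leaves the Pareto set of the ``new'' problem unchanged, so Theorem~\ref{th_red_1} transfers directly. The paper itself gives no proof (it cites~[5]), but this scaling argument is exactly the intended one, consistent with the paper's remark that the preference relation is invariant under linear positive transformations.
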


Thus ``new'' vector criterion $\hat{D}$ differs from the ``old'' one
only by less important component $j$.
In~[20--22] one can find results
on applying particular collections of ``quanta of information'' and
scheme to arbitrary collection.

Suppose we consider two ``quanta of information'' simultaneously: the $i$-th criterion $D_i$ is more important
than the $j$-th criterion $D_j$ with coefficient of relative importance $\theta_{ij}$,
and the $j$-th criterion $D_j$ is more important than the $i$-th criterion $D_i$
with coefficient of relative importance $\theta_{ji}$.
Such conflicting situation occurs only if the inequality $\theta_{ij} + \theta_{ji} < 1$ holds
(see the explanation in~[5]).
The latter guarantees the existence of two vectors $y^{(1)}, y^{(2)} \in \R^m$:
\begin{equation}
\label{quanta_i_j}
\begin{split}
y^{(1)}_i = \theta_{ij} - 1, \ y^{(1)}_j  & = \theta_{ij}, \ y^{(1)}_s = 0; \\
y^{(2)}_i = \theta_{ji},  \ y^{(2)}_j = \theta_{ji} - 1, & \ y^{(2)}_s = 0 \quad \forall s \in I\setminus \{i, j\},
\end{split}
\end{equation}
such that the relations $y^{(1)} \prec 0_m$, $y^{(2)} \prec 0_m$ are valid~[5].

The following theorem shows how to apply two ``quanta of information''.

\begin{thm}[see {[5]}]
\label{th_red_2}
Given two ``quanta of information'':  the $i$-th criterion $D_i$ is more important than
the $j$-th criterion $D_j$ with coefficient of relative importance $\theta_{ij}$,
and the $j$-th criterion $D_j$ is more important than the $i$-th criterion $D_i$
with coefficient of relative importance $\theta_{ji}$.
The inequality $\theta_{ij} + \theta_{ji} < 1$ is valid.
Then the inclusions $Ch(\mathcal{D}) \subseteq \hat{P}(\mathcal{D}) \subseteq P(\mathcal{D})$ hold
for any set of selected outcomes $Ch(\mathcal{D})$.
Here $\hat{P}(\mathcal{D}) = D(P_{\hat{D}}(\mathcal{C}))$, and $P_{\hat{D}}(\mathcal{C})$ is the set of pareto-optimal solutions
with respect to $m$-dimensional vector criterion $\hat{D} = (\hat{D}_1, \ldots, \hat{D}_m)$, where
$\hat{D}_i = (1-\theta_{ji}) D_i + \theta_{ji} D_j$, $\hat{D}_j = \theta_{ij} D_i + (1-\theta_{ij}) D_j$,
$\hat{D}_s = D_s$ for all $s \neq i, j$.
\looseness=-1
\end{thm}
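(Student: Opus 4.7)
The plan is to mirror the strategy of Theorem~\ref{th_red_1} and encode both quanta as a single linear positive transformation of the criterion space. I would introduce the map $T\colon \R^m \to \R^m$ which is the identity on coordinates $s \neq i, j$ and acts on the $(i,j)$-block as
\[
T_{ij} = \begin{pmatrix} 1-\theta_{ji} & \theta_{ji} \\ \theta_{ij} & 1-\theta_{ij} \end{pmatrix},
\]
so that $\hat{D} = T \circ D$ is precisely the new vector criterion in the statement. A direct computation gives $\det T_{ij} = 1 - \theta_{ij} - \theta_{ji}$, which is strictly positive by hypothesis, so $T$ is invertible. All entries of $T$ are non-negative, and this immediately yields the right-hand inclusion $\hat{P}(\mathcal{D}) \subseteq P(\mathcal{D})$: if $D(C^*) \leq D(C)$ in the Pareto sense then $T\bigl(D(C^*) - D(C)\bigr)$ is componentwise non-positive and nonzero, so $\hat{D}(C^*) \leq \hat{D}(C)$, i.e.\ any tour dominated in the original criterion is dominated in the new one as well.

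For the left-hand inclusion $Ch(\mathcal{D}) \subseteq \hat{P}(\mathcal{D})$ the quanta finally enter. First I would check how they behave under $T$: a direct $2\times 2$ calculation gives
\[
T y^{(1)} = (\theta_{ij} + \theta_{ji} - 1)\,e_i, \qquad T y^{(2)} = (\theta_{ij} + \theta_{ji} - 1)\,e_j,
\]
which by $\theta_{ij} + \theta_{ji} < 1$ are non-positive vectors with a single strictly negative entry, so the quanta ``fit inside'' the new Pareto cone. Now suppose $C^*, C \in \mathcal{C}$ satisfy $\hat{D}(C^*) \leq \hat{D}(C)$ and set $z = D(C^*) - D(C)$. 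Then $T z = -u$ for some $u \in \R^m_+ \setminus \{0\}$, and inverting the block $T_{ij}$ produces the explicit decomposition
\[
z \;=\; \frac{u_i}{1 - \theta_{ij} - \theta_{ji}}\, y^{(1)} \;+\; \frac{u_j}{1 - \theta_{ij} - \theta_{ji}}\, y^{(2)} \;+\; \sum_{s \neq i, j} u_s\,(-e_s),
\]
a non-negative combination, with at least one strictly positive coefficient, of vectors each known to satisfy ``$\,\prec 0_m$'': the $y^{(k)}$ by the two given quanta of information, the $-e_s$ by compatibility of $\prec$ with the criteria.

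The remaining step is to promote this conic representation of $z$ into the statement $z \prec 0_m$, whence $D(C^*) \prec D(C)$ and $C$ cannot belong to any set of selected outcomes. This is the main obstacle, and it is exactly the point where one must appeal to the structural properties of $\prec$ developed in~[5]: invariance of $\prec$ under positive linear scaling together with the closure of its preference cone under positive linear combinations of known ``preferred'' directions. Granted this lemma, $C \notin Ch(\mathcal{D})$ whenever $C \notin P_{\hat{D}}(\mathcal{C})$, which finishes the proof. The computational piece, namely the explicit $2\times 2$ inversion and the verification that the coefficients of $y^{(1)}$ and $y^{(2)}$ are non-negative, is routine and does not need to be expanded here.
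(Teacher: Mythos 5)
Your proposal is correct, but note that the paper gives no proof of this statement at all: Theorem~\ref{th_red_2} is imported from Noghin's monograph [5], so the only comparison possible is with the standard argument there and with the cone viewpoint the paper itself uses for Theorems~\ref{prop1_crit_theta}--\ref{thm3_crit_theta}. Your computations check out: $\det T_{ij}=1-\theta_{ij}-\theta_{ji}>0$, nonnegativity of the entries of $T$ together with invertibility yields $\hat{P}(\mathcal{D})\subseteq P(\mathcal{D})$, and the displayed decomposition of $z=D(C^*)-D(C)$ as a nonnegative, nontrivial combination of $y^{(1)}$, $y^{(2)}$ and the vectors $-e_s$ is exactly the inverse of the $2\times 2$ block. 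The one step that is not self-contained is the one you flag yourself: passing from that conic decomposition to $z\prec 0_m$ needs the set $\{y\in\mathbb{R}^m : y\prec 0_m\}$ to be closed under positive scaling and addition, which in [5] follows from transitivity plus invariance of $\prec$ under linear positive transformations (if $v_1\prec 0_m$ and $v_2\prec 0_m$ and $\lambda_1,\lambda_2>0$, then $\lambda_1 v_1+\lambda_2 v_2\prec\lambda_2 v_2\prec 0_m$); invoking this as part of the axiomatics is legitimate, and translation invariance plus the exclusion property of $\prec$ then give $D(C)\notin Ch(\mathcal{D})$, so the left inclusion holds. Structurally your route is equivalent to Noghin's: your map $T$ encodes the fact that domination by the enlarged cone $M=\mbox{cone}\{-e^1,\dots,-e^m,y^{(1)},y^{(2)}\}\setminus\{0_m\}$ coincides with Pareto domination in the new criterion $\hat{D}$, which is also the mechanism behind the paper's proof of Theorem~\ref{prop1_crit_theta}; Noghin derives the new criteria as generators of the dual cone of $M$, whereas you recover the same fact by explicit inversion of the $2\times 2$ block --- a slightly more computational but equivalent path.
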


\textbf{\textit{3.2. Pareto set reduction in bi-ATSP.}}
Here we consider the bi-ATSP and its properties with respect to reduction of the Pareto set.

Obviously, the upper bound on the cardinality of the Pareto set $P(\mathcal{D})$ is $(n-1)!$,
and this bound is tight~[23].
Authors~[24] established the maximum number of elements in the Pareto set
for any multicriteria discrete problem,
that in the case of the bi-ATSP gives the following upper bound:
$|P(\mathcal{\mathcal{D}})| \leqslant \min\{l_1, l_2\}$, where $l_i$ is the number of different values in
the set $\mathcal{D}_i = D_i(\mathcal{C})$, $i = 1, 2$.
In the case of the bi-ATSP with integer weights we get
$l_i \leqslant \max\{\mathcal{D}_i\} - \min\{\mathcal{D}_i\} + 1$,
where values $\max\{\mathcal{D}_i\}$ and $\min\{\mathcal{D}_i\}$ can be replaced
by upper and lower bounds on the objective function ${D}_i$, $i = 1, 2$. 

Now, we go to establish theoretical results estimating the degree of the Pareto set reduction.
Let us consider the case, when all elements of the Pareto set lay on principal diagonal of some rectangle in the criterion space.

\begin{thm}
\label{prop1_crit_theta}
Let $P(\mathcal{D}) = \{ (y_1, y_2): y_2 = a - k y_1, y_1 \in \mathcal{D}_1, y_2 \in \mathcal{D}_2 \}$,
where $a$ and $k$ are arbitrary positive constants.
Suppose the 1st criterion $D_1$ is more important than the 2nd one $D_2$
with coefficient of relative importance~$\theta'$. If $\theta' \geqslant k/(k+1)$, then
the reduction of the Pareto set $\hat{P}(\mathcal{D})$ consists of only one element.
In the case of $\theta' < k/(k+1)$ the reduction does not hold, i.e. $\hat{P}(\mathcal{D}) = P(\mathcal{D})$.
\end{thm}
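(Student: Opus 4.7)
The plan is to apply Corollary~\ref{cl_red}: the ``quantum of information'' translates the original criterion into $\hat{D}_1 = D_1$ and $\hat{D}_2 = \theta' D_1 + (1-\theta') D_2$, and it then suffices to determine $P_{\hat{D}}(\mathcal{C})$ and pass to the image under $D$. A preliminary observation simplifies the search drastically: since $1-\theta' > 0$ and $\hat{D}_1 = D_1$, any strict $D$-domination $D(C') \leqslant D(C'')$ yields a strict $\hat{D}$-domination as well, so $P_{\hat{D}}(\mathcal{C}) \subseteq P_{D}(\mathcal{C})$. Hence I only need to compare tours whose $D$-images already lie on the segment $y_2 = a - k y_1$.

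Next I would substitute this parametrization into $\hat{D}$. Each $y = (y_1, a - k y_1) \in P(\mathcal{D})$ is mapped to $\hat{y} = \bigl(y_1,\; (1-\theta')a + (\theta' - (1-\theta')k)\,y_1\bigr)$, so both coordinates of $\hat{y}$ become affine functions of the single parameter $y_1$. The entire behaviour of $P_{\hat{D}}$ on this segment therefore depends on the sign of the slope $\sigma = \theta' - (1-\theta')k$, which is non-negative precisely when $\theta' \geqslant k/(k+1)$.

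Finally, I would split into cases. If $\sigma \geqslant 0$, then $\hat{y}_1$ and $\hat{y}_2$ are simultaneously non-decreasing in $y_1$, so the unique point of $P(\mathcal{D})$ with the smallest $y_1$ weakly dominates all the others (strictly in the first coordinate when $\sigma = 0$, and in both coordinates when $\sigma > 0$), which gives $|\hat{P}(\mathcal{D})| = 1$. If $\sigma < 0$, then decreasing $y_1$ strictly decreases $\hat{y}_1$ but strictly increases $\hat{y}_2$, so no two distinct points of $P(\mathcal{D})$ are comparable under $\hat{D}$; combined with the inclusion $P_{\hat{D}}(\mathcal{C}) \subseteq P_{D}(\mathcal{C})$ this forces $\hat{P}(\mathcal{D}) = P(\mathcal{D})$. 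No serious obstacle is expected: the only care needed is to handle the boundary $\sigma = 0$ (where the tie among $\hat{y}_2$-values is broken by $\hat{y}_1 = y_1$) and to use the inclusion above so that the second case yields equality rather than mere containment.
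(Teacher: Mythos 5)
Your proposal is correct, but it takes a genuinely different route from the paper's proof. The paper argues geometrically via cone dominance: the ``quantum of information'' enlarges the non-positive orthant to the cone $M=\mbox{cone}\{-e^1,-e^2,y'\}\setminus\{0_2\}$ with $y'=(\theta'-1,\theta')$, the set $\hat{P}(\mathcal{D})$ is characterized as the outcomes non-dominated with respect to $M$, and the whole statement reduces to checking whether the direction $(-1,\ k)^T$ of the line carrying $P(\mathcal{D})$ lies in $M$, which holds iff $\theta'\geqslant k/(k+1)$. You instead work algebraically with the recomputed criterion of Theorem~\ref{th_red_1} and Corollary~\ref{cl_red}, $\hat{D}_1=D_1$, $\hat{D}_2=\theta' D_1+(1-\theta')D_2$, restrict attention to the line via $P_{\hat{D}}(\mathcal{C})\subseteq P_{D}(\mathcal{C})$, and read everything off the sign of the slope $\sigma=\theta'-(1-\theta')k$ of the image segment; of course $\sigma\geqslant 0$ is exactly the paper's cone condition $(-1,\ k)^T\in M$, so the two case splits coincide. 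Your version is more elementary and self-contained (only the reduction theorem plus a direct computation), it exhibits explicitly which outcome survives (the one minimizing $y_1$, hence $D_1$), and it handles the boundary $\theta'=k/(k+1)$ cleanly; the paper's cone picture buys geometric intuition and carries over verbatim to the two-quanta situation of Theorems~\ref{th_red_2} and~\ref{thm3_crit_theta}, where the cone simply acquires the second generator from~(\ref{quanta_i_j}). One small point common to both arguments: concluding $\hat{P}(\mathcal{D})=P(\mathcal{D})$ when no two Pareto outcomes are $\hat{D}$-comparable tacitly uses that, $\mathcal{D}$ being finite, any $\hat{D}$-dominated outcome is dominated by some $\hat{D}$-non-dominated one, so the inclusion $P_{\hat{D}}(\mathcal{C})\subseteq P_{D}(\mathcal{C})$ really does let you test dominance only against elements of $P(\mathcal{D})$; your sketch invokes this implicitly, just as the paper does, so it is not a gap by the paper's own standard, though spelling it out would make the second case airtight.
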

\begin{proof}
The proof is based on geometrical representation of the Pareto set reduction using cone dominance.

Following~[5] the domination of some vector $\hat{y} \in \R^2$ over another vector $\tilde{y} \in \R^2$
by the Pareto relation $\leq$, i.e. $\hat{y} \leq \tilde{y}$,
means that their difference $\hat{y} - \tilde{y}$ belongs
to convex cone $\R_{-}^2 = \mbox{cone}\{-e^1, -e^2 \}\setminus\{0_2\}$ (the non-positive orthant),
where $e^1$ and $e^2$ are unit vectors of space $\R^2$.
In other words, vector $\hat{y}$ dominates vector $\tilde{y}$ by cone $\R_{-}^2$.
Thus, the Pareto set is actually the set of non-dominated vectors with respect to non-positive orthant.

A ``quantum of information'' 1st criterion $D_1$ is more important than the 2nd one $D_2$
in terms of coefficient $\theta'$ is defined by vector $y'$
with components $y'_1 = \theta' - 1$, $y'_2 = \theta'$. Such vector $y'$ extends the non-positive orthant
to convex cone $M = \mbox{cone}\{ -e^1, -e^2,  y' \}\setminus\{0_2\}$,
and set $\hat{P}(\mathcal{D})$ from Theorem~\ref{th_red_1} is the set of non-dominated vectors with respect to cone~$M$.

Vector $(-1, \ k)^T$ gives the direction to line $y_2 = a - k y_1$ in the criterion space.
If vector $(-1, \ k)^T$ belongs to a convex cone without $\{0_2\}$,
generated by vectors $-e^1$, $-e^2$, and $y'$,
then all vectors except one of the Pareto set $P(\mathcal{D})$ will be dominated
with respect to that cone. It is easy to check that the inclusion
$(-1, \ k)^T \in \mbox{cone}\{-e^1, -e^2, y'\} \setminus \{0_2\}$
is valid iff $\theta' \geqslant k/(k+1)$.
\end{proof}

\begin{thm}
\label{prop2_crit_theta}
Let in Theorem~\ref{prop1_crit_theta}, otherwise,
the 2nd criterion $D_2$ is more important than the 1st one $D_1$
with coefficient of relative importance $\theta''$.
Then the reduction of the Pareto set $\hat{P}(\mathcal{D})$ has only one element if $\theta'' \geqslant 1/(k+1)$,
and $\hat{P}(\mathcal{D}) = P(\mathcal{D})$ if $\theta'' < 1/(k+1)$.
\end{thm}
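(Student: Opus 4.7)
The plan is to mirror the geometric argument of Theorem~\ref{prop1_crit_theta}, swapping the roles of the two criteria. The ``quantum of information'' that $D_2$ is more important than $D_1$ with coefficient $\theta''$ corresponds (by Definition~\ref{ref_def_i_j} in its invariant form) to the vector $y''$ with components $y''_1 = \theta''$, $y''_2 = \theta'' - 1$. The relevant dominance cone is therefore $M = \mathrm{cone}\{-e^1, -e^2, y''\} \setminus \{0_2\}$, and by Theorem~\ref{th_red_1} the set $\hat{P}(\mathcal{D})$ coincides with the vectors of $P(\mathcal{D})$ that are not dominated with respect to~$M$.

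Next, I would exploit the linear structure of $P(\mathcal{D})$. Any two distinct points of the Pareto set on the line $y_2 = a - k y_1$ differ by a non-zero multiple of the direction $(1, -k)^T$. The reduced set $\hat{P}(\mathcal{D})$ collapses to a single element exactly when one of $(1, -k)^T$ or $(-1, k)^T$ belongs to $M$: then every Pareto point except one extreme is $M$-dominated by a neighbour along the segment. Since $D_2$ is now the preferred criterion, the candidate direction is $(1, -k)^T$ (worsening $y_1$ in exchange for improving $y_2$), so the point with smaller $y_2$ should $M$-dominate the one with larger $y_2$.

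It then remains to check when $(1, -k)^T = \alpha(-e^1) + \beta(-e^2) + \gamma y''$ admits a solution with $\alpha, \beta, \gamma \geqslant 0$. Reading off the two coordinates yields $\alpha = \gamma\theta'' - 1$ and $\beta = k - \gamma(1-\theta'')$, so the non-negativity constraints $\gamma \geqslant 1/\theta''$ and $\gamma \leqslant k/(1-\theta'')$ are jointly satisfiable iff $1 - \theta'' \leqslant k\theta''$, i.e.\ iff $\theta'' \geqslant 1/(k+1)$. In the complementary case $\theta'' < 1/(k+1)$ the vector $(1,-k)^T$ lies outside $M$ (and $(-1,k)^T$ never lies in $M$, as its first coordinate is negative while all generators of $M$ have non-positive or strictly non-negative first coordinate only in the direction of $y''$); hence no pair of Pareto points becomes $M$-comparable and $\hat{P}(\mathcal{D}) = P(\mathcal{D})$.

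The main obstacle, modest as it is, lies in selecting the correct direction vector, since the threshold depends on whether one tries to place $(1,-k)^T$ or $(-1,k)^T$ into $M$. A convenient consistency check is the symmetry with Theorem~\ref{prop1_crit_theta}: interchanging the criteria turns the slope $k$ into $1/k$, and substituting $1/k$ for $k$ in the threshold $k/(k+1)$ produces exactly $1/(k+1)$, matching the bound claimed here.
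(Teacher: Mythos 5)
Your proof is correct and takes essentially the same route as the paper, which simply declares the proof analogous to that of Theorem~\ref{prop1_crit_theta}: one replaces $y'$ by $y''=(\theta'',\theta''-1)$, forms the cone $M=\mbox{cone}\{-e^1,-e^2,y''\}\setminus\{0_2\}$, and checks that $(1,-k)^T\in M$ iff $\theta''\geqslant 1/(k+1)$, exactly as in your coordinate computation. One minor repair: the reason $(-1,k)^T\notin M$ is that every generator of $M$ has non-positive second coordinate while $k>0$, not the first-coordinate argument you sketch.
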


The proof of Theorem~\ref{prop2_crit_theta} is analogous to the proof of Theorem~\ref{prop1_crit_theta}.

Particularly, if the feasible set $\mathcal{D}$ lay on the line $y_2 = a - k y_1$,
we have $P(\mathcal{D}) = \mathcal{D}$,
and the conditions of Theorems~\ref{prop1_crit_theta} and~\ref{prop2_crit_theta} hold.
In such case we say, that {\it criteria $D_1$ and $D_2$ contradict each other with coefficient~$k$}.

Obviously, for any bi-ATSP instance there exists the minimum number of parallel lines with a negative slope, that all elements of the Pareto set belong to them. Thus we have

\begin{cor}
\label{cl_crit_theta}
Let $P(\mathcal{D}) = \bigcup_{i=1}^p \{ (y_1, y_2): y_2 = a_i - k y_1, y_1 \in \mathcal{D}_1, y_2 \in \mathcal{D}_2 \}$,
where $a_i$, $i = 1, \ldots, p$, and $k$ are arbitrary positive constants.
And criterion $D_1$ is more important than criterion $D_2$
with coefficient of relative importance~$\theta'$ and $\theta' \geqslant k/(k+1)$,
or criterion $D_2$ is more important than criterion $D_1$
with coefficient of relative importance~$\theta''$ and $\theta'' \geqslant 1/(k+1)$,
then $|\hat{P}(\mathcal{D})| \leqslant p$.
\end{cor}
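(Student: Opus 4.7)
The plan is to reuse the cone--dominance machinery from the proof of Theorem~\ref{prop1_crit_theta} (or Theorem~\ref{prop2_crit_theta}, depending on which ``quantum'' is given) and apply it one line at a time. Recall that, under the hypothesis on $\theta'$ or $\theta''$, the reduction $\hat{P}(\mathcal{D})$ coincides with the set of vectors of $\mathcal{D}$ that are non-dominated with respect to the enlarged convex cone $M = \mbox{cone}\{-e^1, -e^2, y'\}\setminus\{0_2\}$ (or the analogous cone generated by $-e^1, -e^2, y''$). The argument of Theorem~\ref{prop1_crit_theta} shows precisely that, under the stated condition on the coefficient of relative importance, the direction $(-1,\ k)^T$ lies inside $M$, i.e. moving along a line of slope $-k$ in the direction of decreasing $y_1$ is a dominating direction for $M$.

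Fix one of the lines $L_i=\{(y_1,y_2):\ y_2=a_i-ky_1\}$. For any two points $\hat y,\tilde y\in P(\mathcal D)\cap L_i$ with $\hat y_1<\tilde y_1$, the difference $\hat y-\tilde y$ is a positive multiple of $(-1,\ k)^T$ and therefore belongs to $M$. Consequently $\hat y$ dominates $\tilde y$ with respect to $M$, so $\tilde y\notin\hat P(\mathcal D)$. Hence at most one point of $P(\mathcal D)\cap L_i$ survives in $\hat P(\mathcal D)$, namely the one with the smallest first coordinate on that line. Summing over $i=1,\dots,p$ gives $|\hat P(\mathcal D)|\le p$, which is the desired bound.

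No genuine obstacle arises: the only thing worth checking carefully is that $\hat P(\mathcal D)\subseteq P(\mathcal D)$ (which is already given by Theorems~\ref{th_red_1}--\ref{th_red_2}), so we may restrict attention to points of $P(\mathcal D)$ from the outset, and, within each line $L_i$, the cone--dominance argument is exactly the one used to handle the single-line case in Theorem~\ref{prop1_crit_theta}. The case where the ``quantum'' involves $\theta''$ is handled symmetrically, using that $(-1,\ k)^T$ lies in $\mbox{cone}\{-e^1,-e^2,y''\}\setminus\{0_2\}$ precisely when $\theta''\ge 1/(k+1)$, as shown in the proof of Theorem~\ref{prop2_crit_theta}. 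Note that no interaction between different lines needs to be analysed; such interactions can only make $\hat P(\mathcal D)$ smaller and thus preserve the inequality.
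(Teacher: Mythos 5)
Your overall route coincides with the paper's: Corollary~\ref{cl_crit_theta} is presented there as an immediate consequence of Theorems~\ref{prop1_crit_theta} and~\ref{prop2_crit_theta}, and your line-by-line cone-dominance argument (each line of slope $-k$ can carry at most one point that is non-dominated with respect to the enlarged cone, and $\hat P(\mathcal D)\subseteq P(\mathcal D)$ lets you restrict to the Pareto points) is exactly the intended justification. The $\theta'$ half of your argument is carried out correctly.

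There is, however, a concrete error in your symmetric $\theta''$ case. For the quantum ``$D_2$ more important than $D_1$'' the defining vector is $y''=(\theta'',\,\theta''-1)$, so every element of $\mbox{cone}\{-e^1,-e^2,y''\}$ has non-positive second component; hence this cone can never contain $(-1,\,k)^T$, whose second component is positive, and the equivalence you assert (``$(-1,k)^T$ lies in that cone precisely when $\theta''\ge 1/(k+1)$'') is false for every $\theta''\in(0,1)$. What the proof of Theorem~\ref{prop2_crit_theta} actually uses is that the \emph{opposite} direction $(1,\,-k)^T$ belongs to $\mbox{cone}\{-e^1,-e^2,y''\}\setminus\{0_2\}$ if and only if $\theta''\ge 1/(k+1)$. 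With that correction your argument goes through verbatim, except that on each line $L_i$ the surviving point is the one with the largest first coordinate (smallest second coordinate), not the smallest. Since dominance by a cone is not invariant under reversing the direction vector, this needs to be stated correctly, although the final bound $|\hat P(\mathcal D)|\le p$ is unaffected.
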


\begin{thm}
\label{thm3_crit_theta}
Let $P(\mathcal{D}) = \{ (y_1, y_2): y_2 = a - k y_1, y_1 \in \mathcal{D}_1, y_2 \in \mathcal{D}_2 \}$,
where $a$ and $k$ are arbitrary positive constants.
Suppose the 1st criterion $D_1$ is more important than the 2nd one $D_2$
with coefficient of relative importance~$\theta'$, and the 2nd criterion $D_2$ is more important than the 1st one $D_1$
with coefficient of relative importance $\theta''$. If at least one inequality
$\theta' \geqslant k/(k+1)$ or $\theta'' \geqslant 1/(k+1)$ is valid, then
the reduction of the Pareto set $\hat{P}(\mathcal{D})$ consists of only one element.
If both inequalities $\theta' < k/(k+1)$ and $\theta'' < 1/(k+1)$ hold, then the reduction does not occur, i.e. $\hat{P}(\mathcal{D}) = P(\mathcal{D})$.
\end{thm}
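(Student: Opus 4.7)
My plan is to extend the cone-dominance geometry used in the proof of Theorem~\ref{prop1_crit_theta} to the two-quanta situation, taking advantage of Theorem~\ref{th_red_2}. With $i=1$, $j=2$ in~(\ref{quanta_i_j}) the two quanta produce vectors $y^{(1)} = (\theta'-1,\ \theta')^T$ and $y^{(2)} = (\theta'',\ \theta''-1)^T$, and I would let $M = \mbox{cone}\{-e^1, -e^2, y^{(1)}, y^{(2)}\} \setminus \{0_2\}$. By Theorem~\ref{th_red_2}, $\hat P(\mathcal D)$ coincides with the subset of $P(\mathcal D)$ consisting of outcomes non-dominated with respect to~$M$.

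Since $P(\mathcal D)$ sits on a single line with direction $(1,\ -k)^T$, every pairwise difference of Pareto outcomes is a nonzero multiple of $(1,\ -k)^T$, so the whole question reduces to deciding when $(-1,\ k)^T$ or $(1,\ -k)^T$ lies in $M$. The key observation I would rely on is that $(-1,\ k)^T$ has strictly positive second coordinate while $-e^2$ and $y^{(2)}$ both have negative second coordinate, so any conic representation of $(-1,\ k)^T$ may be taken with zero coefficients on those two generators. Hence $(-1,\ k)^T \in M$ iff $(-1,\ k)^T \in \mbox{cone}\{-e^1, y^{(1)}\}$, and the computation inside the proof of Theorem~\ref{prop1_crit_theta} already shows this to be equivalent to $\theta' \geqslant k/(k+1)$. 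By the mirror argument (interchanging the roles of the two coordinates, so that $-e^1$ and $y^{(1)}$ are the redundant generators), $(1,\ -k)^T \in M$ iff $\theta'' \geqslant 1/(k+1)$, in line with Theorem~\ref{prop2_crit_theta}.

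The theorem then follows by a short case analysis. If $\theta' \geqslant k/(k+1)$, the outcome of smallest $y_1$-coordinate dominates every other Pareto outcome via a positive multiple of $(-1,\ k)^T \in M$; if $\theta'' \geqslant 1/(k+1)$, the outcome of largest $y_1$-coordinate dominates the rest. In either case $|\hat P(\mathcal D)| = 1$; note that since the two quanta must satisfy $\theta' + \theta'' < 1$ and $k/(k+1) + 1/(k+1) = 1$, at most one of the two inequalities can actually hold, so the two regimes do not clash. Conversely, if both strict inequalities $\theta' < k/(k+1)$ and $\theta'' < 1/(k+1)$ hold, neither $(-1,\ k)^T$ nor $(1,\ -k)^T$ belongs to $M$, so no outcome of $P(\mathcal D)$ dominates another under~$M$; transitivity of cone dominance together with $\R_{-}^2 \subset M$ prevents any point of $\mathcal D \setminus P(\mathcal D)$ from producing a new domination, and hence $\hat P(\mathcal D) = P(\mathcal D)$. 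The one place that I expect to need genuine care is the separation-of-generators claim for $M$ in the middle paragraph; once it is justified, the rest is bookkeeping built on the two single-quantum theorems already in place.
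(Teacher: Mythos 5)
Your route is the same as the paper's: the published proof of Theorem~\ref{thm3_crit_theta} is precisely the cone argument of Theorem~\ref{prop1_crit_theta} with the non-positive orthant enlarged by the two vectors $y^{(1)},y^{(2)}$ from~(\ref{quanta_i_j}) to $M=\mbox{cone}\{-e^1,-e^2,y^{(1)},y^{(2)}\}\setminus\{0_2\}$, and your case analysis (including the useful remark that $\theta'+\theta''<1$ makes the two threshold inequalities mutually exclusive) reaches the correct conclusion.

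The step you flag yourself is, however, a genuine gap as argued: dropping the generators $-e^2$ and $y^{(2)}$ from a conic representation of $(-1,k)^T$ does not follow from the sign of the second coordinate alone. Indeed, if one had $\theta'+\theta''>1$, the nonnegative combinations of the very same four generators (with the same sign pattern) fill all of $\R^2$, so $(-1,k)^T$ would admit a representation for every $k>0$ even when $\theta'<k/(k+1)$; hence any correct justification must invoke the consistency condition $\theta'+\theta''<1$ carried by Theorem~\ref{th_red_2}, not just the signs. Under that condition the repair is short: $-e^1$ and $-e^2$ are themselves nonnegative combinations of $y^{(1)}$ and $y^{(2)}$ (the coefficients come out with denominator $1-\theta'-\theta''>0$), so $M=\mbox{cone}\{y^{(1)},y^{(2)}\}\setminus\{0_2\}$ is a pointed convex cone with extreme rays $y^{(1)},y^{(2)}$, and membership of $(-1,k)^T$ (resp. $(1,-k)^T$) reduces to solving a $2\times2$ linear system or comparing slopes, which yields exactly $\theta'\geqslant k/(k+1)$ (resp. $\theta''\geqslant 1/(k+1)$), independently of the other coefficient. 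Note that in the single-quantum setting of Theorem~\ref{prop1_crit_theta} the analogous purge is legitimate because that cone is automatically pointed, which is why the shortcut cannot simply be quoted here. With this justification inserted, the rest of your bookkeeping --- one endpoint dominating the whole line in the first regime, and no domination along the line, hence $\hat{P}(\mathcal{D})=P(\mathcal{D})$, in the second --- goes through.
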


The proof of Theorem~\ref{thm3_crit_theta} is analogous to the proof of Theorem~\ref{prop1_crit_theta}.
Here two ``quanta of information'' are defined by two vectors $y^{(1)}$, $y^{(2)}$ with components~(\ref{quanta_i_j}),
where $m = 2$, $i = 1$, and $j = 2$, that extend the non-positive orthant
to convex cone $M = \mbox{cone}\{ -e^1, -e^2,  y^{(1)}, y^{(2)} \}\setminus\{0_2\}$.

Now we consider the bi-ATSP(1,2), where each arc $e \in E$ has a weight $d(e)= (d_1(e), d_2(e)) \in \{1, \ 2\}^2$
such that $d_2(e) = 3 - d_1(e)$.
Angel et al.~[8] proved that for any $r \geqslant 1$ and any $k \geqslant 5$ there exist
such instances of the bi-ATSP(1,2) with $n = k r$ vertices
that the Pareto set contains at least $r+1$ elements of the form $(2kr, kr)$, $(2kr-n, kr+k)$,
$(2kr-2k, kr+2k)$, $\ldots$, $(kr, 2kr)$.
Thus, criteria $D_1$ and $D_2$ contradict each other with coefficient $1$. 
According to Theorems~\ref{prop1_crit_theta}~and~\ref{prop2_crit_theta}
in this class of the bi-ATSP we could exclude at least $r$ elements from the Pareto set
when using ``quantum of information'' the 1st criterion $D_1$ is more important than the 2nd criterion $D_2$,
or vice versa, with coefficient $\theta = 0.5$.

Emelichev and Perepeliza~[23] constructed bi-ATSP instances with integer weights,
where criteria contradict each other with coefficient 1 and $|P(\mathcal{D})| = (n-1)!$.
So, for these instances $|\hat{P}(\mathcal{D})| = 1$ if the 1st criterion $D_1$ is more important than the 2nd criterion $D_2$,
or vice versa, with coefficient $\theta = 0.5$.

Further, we identify the condition that guarantees excluding at least one element from the Pareto set.
Suppose that a ``quantum of information'' is given: the 1st component of criteria $D_1$ is more important than the 2nd one $D_2$
with coefficient of relative importance $\theta$.
We suppose, that there exist such tours $C', C'' \in P_D(\mathcal{C})$ satisfying the following inequality:
\begin{equation}
\label{prop_2}
\left(D_1(C') - D_1(C'')\right) / \left(D_2(C'') - D_2(C')\right) \geqslant \left(1 - \theta\right) / \theta,
\end{equation}
then at least one element will be excluded from the Pareto set $P(\mathcal{D})$
after using a ``quantum of information''.
So, $|P(\mathcal{D})| - |\hat{P}(\mathcal{D})| \geqslant 1$.
Analogous result could be obtained, when the 2nd criterion is more important than the 1st one.
The difficulty in checking inequality~(\ref{prop_2}) is that we should know two elements of the Pareto set.
Meanwhile the tours $C_{\mbox{min}1} = \mbox{argmin}\{D_1(C), \ C \in \mathcal{C}\}$,
$C_{\mbox{min}2} = \mbox{argmin}\{D_2(C), \ C \in \mathcal{C}\}$ are pareto-optimal by definition.

The results of this subsection are true for any discrete bicriteria problem.

\textbf{4. Multi-objective genetic algorithm.}
The genetic algorithm is a random search method
that models a process of evolution of a population of {\em
individuals}~[25]. Each individual is a sample solution
to the optimization problem being solved.
Individuals of a new population are built by means of reproduction
operators (crossover and/or mutation).

\textbf{\textit{4.1. NSGA-II scheme.}}
To construct an approximation of the Pareto set
to the bi-ATSP we develop a MOGA based on Non-dominated Sorting Genetic Algorithm II (NSGA-II)~[10].
The NSGA-II is initiated by generating $N$ random solutions of the initial population.
Next the population is sorted based on the non-domination relation (the Pareto relation).
Each solution $C$ of the population is
assigned a {\em rank} equal to its {\em non-domination level}
(1 is the best level, 2 is the next best level, and so on).
The first level consists of all non-dominated solutions of the population.
Individuals of the next level are the non-dominated solutions of the population, where
solutions of the first level are discounted, and so on.
Ranks of solutions are calculated in $O(mN^2)$ time by means of the algorithm proposed in~[10].
We try to minimize rank of individuals in evolutionary process.
To get an estimate of the density of solutions surrounding a solution $C$ in a non-dominated level of the population,
two nearest solutions on each side of this solution are identified for each of the objectives.
The estimation of solution $C$ is called {\em crowding distance} and
it is computed as a normalized perimeter of the cuboid formed in the criterion space by
the nearest neighbors.
The crowding distances of individuals in all non-dominated levels
are computed in $O(mN\mathrm{log}N)$ time (see e.g.~[10]).

The NSGA-II is characterized by the population management strategy known as generational model~[25].
At each iteration of NSGA-II we select $N$ pairs of parent solutions from the current population~$P_{t-1}$.
Then we mutate parents, and create offspring, applying a crossover (recombination) to each pair of parents.
Offspring compose population $Q_{t-1}$.

The next population $P_{t}$ is constructed from the best $N$ solutions of the current population~$P_{t-1}$ and
the offspring population $Q_{t-1}$. 
Population $Q_{t-1}\cup P_{t-1}$ is sorted based on the non-domination relation, and the crowding distances
of individuals are calculated.
The best $N$ solutions are selected using the rank and the crowding distance.
Between two solutions with differing non-domination ranks, we prefer the solution with the lower rank.
If both solutions belong to the same level, then we prefer the solution with the bigger crowding distance.
\looseness=-1

One iteration of the presented NSGA-II is performed in $O(mN^2)$ time as shown in~[10].
In our implementation of the NSGA-II
four individuals of the initial population are constructed by a problem-specific heuristic presented in~[26]
for the ATSP with one criterion. The heuristic first solves
the Assignment Problem, and then patches the circuits of the optimum
assignment together to form a feasible tour in two ways.
So, we create two solutions with each of the objectives.
All other individuals of the initial population are generated randomly.

Each parent  is chosen by $s$-{\em tournament 
selection}: sample randomly $s$~individuals from the current
population and select the best one by means of the rank and the crowding distance.
\looseness=-1

\textbf{\textit{4.2. Recombination and mutation operators.}}
The experimental results of~[26, 27] for the m-TSP indicate
that  reproduction operators with the adjacency-based representation of solutions have an
advantage over operators, which emphasize the order or position of the vertices in parent solutions.
We suppose that a feasible solution to the bi-ATSP  is encoded as
a list of arcs.

In the recombination operator we use one of new crossovers $\mbox{DEC}_{\mbox{PR}}$ or $\mbox{DPX}_{\mbox{PR}}$ proposed here.
The operator $\mbox{DEC}_{\mbox{PR}}$ (Directed Edge Crossover with Pareto Relation)
may be considered as a  ``direct descendant'' of the well-known EX operator (Edge Crossover),
and the operator $\mbox{DPX}_{\mbox{PR}}$ (Distance Preserving Crossover with Pareto Relation) is
a  ``direct descendant'' of the well-known DPX operator (Distance Preserving Crossover).
EX and DPX were originally developed for the 1-STSP with single-objective~[27].

Both operators $\mbox{DEC}_{\mbox{PR}}$ and $\mbox{DPX}_{\mbox{PR}}$ are {\em respectful}~[28],
i.e. all arcs shared by both parents are copied into the offspring.
Moreover, we try to construct an offspring of good quality,
taking into account the Pareto relation.
To this end, the tour fragments are reconnected using a modification of the nearest neighborhood heuristic, where at each step
we choose a non-dominated feasible arc. The Pareto relation on the set of arcs is defined similar
to the Pareto relation on  the set of solutions.
Feasible arcs in $\mbox{DEC}_{\mbox{PR}}$ are the ones, that are contained in at least one of the parents.
However, in $\mbox{DPX}_{\mbox{PR}}$ the feasible set consists of the arcs, that are absent in parents.
Note that new arcs are inserted taking into account the non-violation of sub-tour elimination constraints.

$\mbox{DEC}_{\mbox{PR}}$ operator is exploitive, but it can never generate new arcs (transmitting requirement~[28]).
So, we use mutation operators to introduce new arcs and therefore diversity into the MOGA  populations.
By contrast, $\mbox{DPX}_{\mbox{PR}}$ operator is explorative because it not only inherits common arcs from parents,
but also introduces new arcs.

If the offspring obtained by $\mbox{DEC}_{\mbox{PR}}$ or $\mbox{DPX}_{\mbox{PR}}$ is equal to one of the parents,
then the result of the recombination is calculated by applying the well-known {\em shift mutation}~[28]
to one of the two parents with equal probability.
This approach allows us to avoid creating a clone of parents and to maintain a diverse set of solutions in the population.
\looseness=-1

The mutation is also applied to each parent  with
probability $p_{\mbox{mut}}$, which is a tunable parameter of the MOGA.
We use a mutation operator proposed in~[26] for the one-criteria ATSP.
It performs a random jump within 3-opt neighborhood,
trying to improve a parent solution in terms of one of the criteria.
Each time one of two objectives is used in mutation with equal probability.
\looseness=-1

\textbf{5. Computational experiment.}
This section presents the results of the computational experiment on the bi-ATSP instances.
Our MOGA was programmed in C++ and tested on a 
computer with Intel~Core~i5~3470 3.20~GHz processor, 4 Gb~RAM.
We set  the tournament size~${s=10}$ and the mutation probability~${p_{\mbox{mut}}=0.1}$.
We use the notation NSGA-II-DPX (NSGA-II-DEC) for the MOGA employing the $\mbox{DPX}_{\mbox{PR}}$
($\mbox{DEC}_{\mbox{PR}}$) crossover.

Various meta-heuristics and heuristics have been developed for the m-STSP,
such as Pareto local search algorithms, MOEAs, multi-objective ant colony optimization methods,
memetic algorithms and others (see, e.g.,~[15, 16, 29--31]).
However, we have not found in the literature any multi-objective metaheuristic proposed specifically
to the  m-ATSP and experimentally tested on instances with non-symmetric weights of arcs.
In~[18], we proposed new MOGA based on NSGA-II to solve the bi-ATSP, but no a crossover taking
into account bi-criteria nature of the problem was used,
and a detailed experimental evaluation of the MOGA was not performed in~[18].
Note that the performance of MOGAs depends significantly upon the choice of the crossover operator.

So, the computational experiment consists of two stages.
At the first stage estimated the performance of NSGA-II-DPX and NSGA-II-DEC on instances of small sizes, 
for which the Pareto sets are found by an exact algorithm.
At the second stage the degree of reduction of the Pareto set approximation is evaluated.

We note that there exists MOOLIBRARY library~[32],
which contains test instances of some discrete multicriteria problems.
However, the m-TSP is not presented in this library,
so we generate the bi-ATSP test instances randomly and
construct them from the ATSP instances of TSPLIB library~[17], as well.
\looseness=-1

\textbf{\textit{5.1. Pareto set approximation.}}
NSGA-II-DPX and NSGA-II-DEC were tested using small-size problem instances of four series with $n=12$:
S12[1,10][1,10], S12[1,20][1,20], S12[1,10][1,20], S12contr[1,2][1,2].
Each series consists of five problems with integer weights $d_1(\cdot)$ and $d_2(\cdot)$
of arcs randomly generated from intervals specified at the ending of the series name.
In series S12contr[1,2][1,2] the criteria contradict each other with coefficient~$1$, i.e. weights are generated so that $d_2(e)=3-d_1(e)$ for all $e\in E$.
The Pareto sets to the considered bi-ATSP~instances were found by complete enumeration of $(n-1)!$ possible Hamiltonian circuits.
The population size $N$  was set to $50$ on the basis of the preliminary experiments.
Our MOGAs were run $30$ times for each instance and each run continued for $1000$ iterations.
\looseness=-1

Let $P^*:=P(\mathcal{D})$ be the Pareto set, and
$A$ be its approximation obtained by NSGA-II-DPX or NSGA-II-DEC.
In order to evaluate the performance of the proposed algorithms and compare them,
the generational distance (GD)~[12] and
the inverted generational distance (IGD)~[12] are involved as performance metrics:
$$
\mbox{GD}(A,P^*)=\frac{1}{|A|}\sqrt{\sum_{i=1}^{|A|}\mu_i^2},\ \
\mbox{IGD}(A,P^*)=\frac{1}{|P^*|}\sqrt{\sum_{i=1}^{|P^*|}\tilde{\mu}_i^2},
$$
where $\mu_i$ ($\tilde{\mu}_i$) is the Euclidean distance between the $i$-th
member (two-dimensio\-nal point) in the set $A$ ($P^*$) and its nearest member in the
set $P^*$ ($A$). GD can only reflect the convergence of an algorithm,
while IGD could measure both convergence and
diversity. Smaller value of the metrics means better quality.

The results of experiment are presented in Table~\ref{tab:Ser1_10}.
Here index In corresponds to the values at the initial population and
index Fin corresponds to the values at the final population in average over $30$ runs.
$N^{P^*}$ represents the number of elements in the Pareto set.
\looseness=-1

\begin{table}[!h] \centering
\caption{\textbf{Approximation of the Pareto set for series}}
\label{tab:Ser1_10}
{\footnotesize
\begin{tabular}{|c|c|c|c|c|c|c|c|c|c|c|}
\hline
&\multicolumn{4}{|c|}{NSGA-II-DPX}&\multicolumn{4}{|c|}{NSGA-II-DEC}&\\
\cline{2-9}
Inst & $\mbox{GD}_{\mbox{In}}$ & $\mbox{IGD}_{\mbox{In}}$ & $\mbox{GD}_{\mbox{Fin}}$ & $\mbox{IGD}_{\mbox{Fin}}$ & $\mbox{GD}_{\mbox{In}}$ & $\mbox{IGD}_{\mbox{In}}$& $\mbox{GD}_{\mbox{Fin}}$ & $\mbox{IGD}_{\mbox{Fin}}$ & $N^{P^*}$\\
\hline
\multicolumn{10}{|c|}{{\it Series} S12[1,10][1,10]}\\
\hline
1 &  8.109 & 4.89   & 1.214 & 1.205  & 7.515 &  4.686  & {\bf 1.003} &  {\bf 1.045} & 13 \\
\hline
2 &  4.188 & 4.481  & 1.693 & 1.3  &   3.944 &  4.543  & {\bf 1.17}  &  {\bf 0.929} & 10 \\
\hline
3 &  2.614 & 4.224  & 1.519 & 1.489  &  3.791 &  4.243  & {\bf 1.24}  &  {\bf  1.224}& 10\\
\hline
4 &  8.511 & 5.289 & 1.724 &  1.375  & 8.114 &  5.115  & {\bf 1.426} &  {\bf 1.001} & 16 \\
\hline
5 &  7.405 & 3.768    & 1.263 &  1.093   & 7.864 & 3.911  &  1.171 &  1.092& 18\\
\hline
Aver &   6.165 & 4.53 & 1.483 & 1.293 & 6.245 &  4.499 &  1.202 & 1.058 & 13.4 \\
\hline
\multicolumn{10}{|c|}{{\it Series} S12[1,20][1,20]}\\
\hline
1 &   17.843 & 7.82 &   1.685 &  1.511 & 16.42  & 7.45  &    {\bf 1.404} & {\bf 1.337} &  22 \\
\hline
2 &  16.049 & 7.857 &  3.063 &  2.246 & 15.619 & 7.64  &    {\bf 1.987} & {\bf 1.524} &  26 \\
\hline
3 &  15.483 & 7.114 &  2.86 &   2.452 &  15.888 & 7.272 &    {\bf 1.973} & {\bf 1.832} & 24 \\
\hline
4 &  16.531 & 7.39 &    2.296 &  1.676 &  16.129 & 7.165 &    {\bf 1.259} & {\bf 1.072} &  27\\
\hline
5 &  14.468 & 7.152 &  2.643 &  2.484 & 15.294 & 7.485 &    {\bf 1.855} & {\bf 1.824} &  23 \\
\hline
Aver &   16.075  &   7.467  &   2.509 & 2.074  &  15.87 &   7.402 & 1.696 & 1.518   & 24.4 \\
\hline
\multicolumn{10}{|c|}{{\it Series} S12[1,10][1,20]}\\
\hline
1 &  10.732 & 8.427  & 2.835 & 2.2   & 10.76 &  8.343 & {\bf 2.488} & {\bf 1.882} & 11 \\
\hline
2 &  1.994 &  4.802  & 1.686 & 1.478  & 1.143 &  4.935 & {\bf 0.63} &  {\bf 0.915} & 14 \\
\hline
3 &  0     &   4.17  & 1.405 & 1.729  & 0     &  4.17  & {\bf 0.441} & {\bf 1.659} & 6  \\
\hline
4 &  10.621 & 6.17   & 1.965 & 1.714  & 10.508 & 6.2   & {\bf 1.075} & {\bf 1.092} & 21 \\
\hline
5 &  9.444 &  6.034  & 2.455 &  1.942 & 9.448  & 6.098 & {\bf 1.814} & {\bf 1.428} & 17 \\
\hline
Aver &   6.558    & 3.787 & 2.069  & 1.813  &  6.372 & 5.949 & 1.289 & 1.395 & 13.8 \\
\hline
\multicolumn{10}{|c|}{{\it Series} S12contr[1,2][1,2]}\\
\hline
1  &  0  &  0.201 &    0 &  0 & 0 & 0.202 &  0 &  0 & 13 \\
\hline
2  &  0  &  0.199 &    0 &  0 & 0 & 0.199 & 0 &  0 & 13 \\
\hline
3  &  0  & 0.202 &     0 &  0 & 0 & 0.208 & 0 &  0 & 13 \\
\hline
4  &  0  & 0.249 &     0 &  0 & 0 & 0.221 & 0 &  0 & 13 \\
\hline
5  &  0  &  0.217 &    0 &  0 & 0 & 0.201 & 0 & 0  & 13\\
\hline
Aver  &  0 &  0.213 & 0 &  0 & 0 &  0.206 & 0 &  0 & 13\\
\hline
\end{tabular}
\\
Note. The average final result that is significantly better than the other is marked in bold
(according the Wilcoxon signed-rank test).
}
\end{table}

As seen from Table~\ref{tab:Ser1_10},
on all series except for S12contr[1,2][1,2],
for NSGA-II-DPX the distance GD decreases in approximately $5$ times  and
the distance IGD -- in approximately $3.5$ times on average from the initial population to the final one.
At the same time, for NSGA-II-DEC the average distance GD decreases in approximately $6.5$ times  and
the average distance IGD -- in approximately $4.5$ times.
Moreover, the average number of elements in the Pareto set approximation
increases approximately $3$ times during $1000$ iterations for both algorithms.
The values of performance metrics  at the final population show the convergence of
the approximation obtained by NSGA-II-DEC or NSGA-II-DPX to the Pareto set.
The average CPU time of one trial for both MOGAs  is approximately 2 minutes on all instances.

In  instances of series~S12contr[1,2][1,2] any feasible solution is pareto-optimal.
So, the main purpose of the MOGA is to obtain all elements of the Pareto set,
and the Pareto dominance is not important.
Both considered MOGAs find the Pareto set on all trials and on all instances in less than $100$ iterations.

The statistical analysis of experimental data was carried out
using the Wilcoxon signed-rank test~[12] at a $5\%$ significance level.
We test for each instance the difference between values of metric $\mbox{GD}_{\mbox{Fin}}$ (or $\mbox{IGD}_{\mbox{Fin}}$)
for algorithms NSGA-II-DEC and NSGA-II-DPX over $30$ trials.
The average final result for a metric that is significantly better than the other is marked in bold in Table~\ref{tab:Ser1_10}.
In 15 out of 20 considered instances,
NSGA-II-DEC outperforms NSGA-II-DPX (in 14 of 15 cases the differences between values of metrics are statistically significant).
So, we can conclude that our MOGAs demonstrate competitive results, but
$\mbox{DEC}_{\mbox{PR}}$ crossover has an advantage over $\mbox{DPX}_{\mbox{PR}}$ crossover.
Therefore, in subsection 5.2 we use only NSGA-II-DEC to find
an approximation of the Pareto set. Further research may include construction of a MOGA,
where crossovers $\mbox{DEC}_{\mbox{PR}}$ and $\mbox{DPX}_{\mbox{PR}}$ complement each other
and are used together in some way.
In particular, $\mbox{DPX}_{\mbox{PR}}$ can be used to find improvements when $\mbox{DEC}_{\mbox{PR}}$ fails.

We note that the points of the Pareto set and its approximations obtained by MOGAs
are appeared to be ``almost uniformly'' distributed along principal diagonal of a rectangle in all test problems
(see e.~g. one result of our MOGA for series S12[1,10][1,20] on Figure).

\begin{figure}[!h] \centering
\begin{center}
\includegraphics[height=4.2cm]{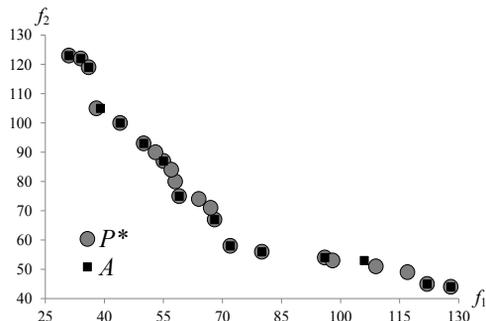}
\caption{Approximation of the Pareto set on one instance of series S12[1,10][1,20] \\ ($\mbox{GD}=0.249$, $\mbox{IGD}=0.566$)} \label{fig:ApproxPS}
\end{center}
\end{figure}

\textbf{\textit{5.2. Pareto set reduction.}}
The reduction of the Pareto set approximation was tested on
the following series with {$n=50$}: S50[1,10][1,10], S50[1,20][1,20], S50[1,10][1,20], S50contr[1,2][1,2].
The series are constructed randomly in the same way as series with {$n=12$} from subsection~5.1.
Each series consists of five instances.
We also took seven ATSP instances of series ftv from TSPLIB library~[17]:
ftv33, ftv35, ftv38, ftv44, ftv47, ftv55, ftv64.
The ftv collection includes instances from vehicle routing applications.
These instances compose series denoted by SftvRand, and their arc weights are used for the first criterion.
The arc weights for the second criterion are generated randomly from interval $[1,d_1^{\max}]$,
where $d_1^{\max}$ is the maximum arc weight on the first criterion.
The population size $N$ was set to $100$. To construct an approximation of the Pareto set $A$ for each instance we run
NSGA-II-DEC once and the run continued for $5000$ iterations.
The only exception is that for series S50contr[1,2][1,2] the run continued for $500$ iterations,
and it was sufficient to find all $51$ points of the Pareto set.

We compare the following cases:
1) when the 1st criterion is more important than the 2nd criterion (1st-2nd case) with $\theta_{12}$;
2) vice versa situation with $\theta_{21}$ (2nd-1st case);
3) 1st-2nd and 2nd-1st cases simultaneously (1st-2nd \& 2nd-1st case).
The degree of the reduction of the Pareto set approximation was investigated
with respect to coefficient of relative importance varying from $0.1$ to $0.9$ by step $0.1$.
On all instances and in all cases for each value of $\theta_{12}$ and $\theta_{21}$  we re-evaluate the obtained approximation
in terms of ``new'' vector criterion $\hat{D}$ upon the formulae from Corollary~\ref{cl_red} and Theorem~\ref{th_red_2}.
Then by the complete enumeration we find the Pareto set approximation
in ``new'' criterion space that gives us the reduction of the Pareto set approximation in the initial criterion space.
\looseness=-1

The number $N^A$ of elements of the Pareto set approximation $A$ and
the percentage of the excluded elements from set $A$ with various values of $\theta_{12}$ and $\theta_{21}$
are presented on average over series in Tables~\ref{tab:S50_1-10_1-10_S50_1-20_1-20} and~\ref{tab:S50_1-10_1-20_SftvRand}.

\begin{table}[!h] \centering
\caption{\textbf{Reduction of the Pareto set approximation for S50[1,10][1,10] and S50[1,20][1,20]}}
\label{tab:S50_1-10_1-10_S50_1-20_1-20}
{\footnotesize
\begin{tabular}{|c|c|c|c|c|c|c|c|c|c|c|c|c|}
\cline{1-11}
\multicolumn{11}{|c|}{{S50[1,10][1,10], \ \ $N^A = 42.4$}}  \\
\cline{1-11}
\multirow{2}{*}{$\theta_{21}$} & \multicolumn{10}{c|}{$\theta_{12}$} & \multicolumn{2}{c}{} \\
\cline{2-11}
& {\scriptsize 2nd-1st} &0.1&0.2&0.3&0.4&0.5&0.6&0.7&0.8&0.9 \\
\cline{1-11}
{\scriptsize 1st-2nd} &  & 7.8 & 22.8 & 42.3 & 51 & 64.3 & 70.6 & 77.1 & 88.1 & 96.7 \\
\cline{1-13}
0.1 & 10.7 & 18 & 33.1 & 51.4 & 59.3 &  72 & 77.1 & 82 & 91 & & 95.8 & 0.9 \\
\cline{1-13}
0.2 & 28.4 & 35.2 & 49.8 &   67.8 & 74.7 & 85.9 & 89.2 & 90.7 & & 95.3 & 92.3 & 0.8 \\
\cline{1-13}
0.3 & 40.3 & 46.6 & 60.3 &  77.5 & 83.5 & 93.9 & 96.6 & & 93.7 & 89.5 & 86 & 0.7 \\
\cline{1-13}
0.4 & 50.4 & 56.8 &   69.6 & 84.7 &  89.4 & 95.3 & & 94.6 & 85.6 & 78.4 & 73.5 & 0.6 \\
\cline{1-13}
0.5 & 63.3 & 69.6 &   81 & 93.5 & 95.9 & & 96.6 & 90.9 & 80.5 & 72.2 & 66.1 & 0.5 \\
\cline{1-13}
0.6 & 69.4 & 75.3 &   85.1 & 95.6 & & 96.5 & 93.6 & 87.1 & 73.3 & 63.1 & 56.2 & 0.4 \\
\cline{1-13}
0.7 & 77.8 & 82.8 & 89.6 & & 95.1 & 92.1 & 87.7 & 80.2 & 65.7 & 52.2 & 45 & 0.3 \\
\cline{1-13}
0.8 & 90.9 & 94.9 &  & 94.9 & 91.1 & 86.7  & 81.1 & 71.4 & 56.2 & 41.7 & 33.9 & 0.2 \\
\cline{1-13}
0.9 & 95.6 &  & 94.6  & 87.1 & 80.9 & 75.3 & 67.8 & 57 & 38.5 & 22.2 & 14.1 & 0.1  \\
\cline{1-13}
\multicolumn{2}{c|}{} & 97 &  91.8 &  81.9 &  72.2 &  65.1 &  55.2 & 43.6 &  24.8 & 8.08 & & {\scriptsize 1st-2nd}  \\
\cline{3-13}
\multicolumn{2}{c|}{} & 0.9 &  0.8 &  0.7 &  0.6 &  0.5 &  0.4 & 0.3 &  0.2 &  0.1 & {\scriptsize 2nd-1st} & \multirow{2}{*}{$\theta_{21}$}  \\
\cline{3-12}
\multicolumn{2}{c|}{} & \multicolumn{10}{c|}{$\theta_{12}$} & \\
\cline{3-13}
\multicolumn{2}{c|}{} & \multicolumn{11}{|c|}{{S50[1,20][1,20], \ \ $N^A = 52.6$}}  \\
\cline{3-13}
\end{tabular}
}
\end{table}

\begin{table}[!h] \centering
\caption{\textbf{Reduction of the Pareto set approximation for S50[1,10][1,20] and SftvRand}}
\label{tab:S50_1-10_1-20_SftvRand}
{\footnotesize
\begin{tabular}{|c|c|c|c|c|c|c|c|c|c|c|c|c|}
\cline{1-11}
\multicolumn{11}{|c|}{{S50[1,10][1,20], \ \ $N^A = 54.4$}}  \\
\cline{1-11}
\multirow{2}{*}{$\theta_{21}$} & \multicolumn{10}{c|}{$\theta_{12}$} & \multicolumn{2}{c}{} \\
\cline{2-11}
& {\scriptsize 2nd-1st} &0.1&0.2&0.3&0.4&0.5&0.6&0.7&0.8&0.9 \\
\cline{1-11}
{\scriptsize 1st-2nd} &  & 10.9 & 24 & 38.2 & 45.5 & 56.4 & 60 & 74.6 & 85.5 & 96.4 \\
\cline{1-13}
0.1 & 21.1 & 26.6 & 36.6 & 50.1 & 60.2 &  70.7 & 75.1 & 83.8 & 90.8 & & 97.8 & 0.9 \\
\cline{1-13}
0.2 & 42.3 & 47.4 & 57.5 & 69.9 & 77.8 & 87.2 & 90.1 & 95.6 & & 97.6 & 97.1 & 0.8 \\
\cline{1-13}
0.3 & 55.1 & 59.9 & 68.8 & 80 & 85.9 & 92.3 & 94.5 & & 94.9 & 92.5 & 91.6 & 0.7 \\
\cline{1-13}
0.4 & 68 & 72 &   78.3 & 86.7 &  91.3 & 96.3 & & 96.4 & 92.3 & 88.3 & 87.2 & 0.6 \\
\cline{1-13}
0.5 & 81.1 & 82.9 &   87.3 & 93.8 & 96.3 & & 95.1 & 92.4 & 87.5 & 83.2 & 81.4 & 0.5 \\
\cline{1-13}
0.6 & 86.9 & 88 & 91.7 & 96.7 & & 96.4 & 91 & 86.9 & 80.6 & 75.4 & 72.2 & 0.4 \\
\cline{1-13}
0.7 & 94.1 & 94.5 & 96.3 & & 94.8 & 89.6 & 82 & 77.3 & 70 & 63.6 & 59.2 & 0.3 \\
\cline{1-13}
0.8 & 97.4 & 97.4 &  & 93.7 & 87.1 & 78.5 & 68 & 61.8 & 52.2 & 44.3 & 39.3 & 0.2 \\
\cline{1-13}
0.9 & 98.2 &  & 92  & 81.7 & 74.2 & 62.3 & 50.4 & 42.2 & 31.8 & 22.7 & 17 & 0.1  \\
\cline{1-13}
\multicolumn{2}{c|}{} & 96 &  89.3 &  75.2 &  65.5 &  52.2 &  38.1 & 28.2 &  16 & 6.1 & & {\scriptsize 1st-2nd}  \\
\cline{3-13}
\multicolumn{2}{c|}{} & 0.9 &  0.8 &  0.7 &  0.6 &  0.5 &  0.4 & 0.3 &  0.2 &  0.1 & {\scriptsize 2nd-1st} & \multirow{2}{*}{$\theta_{21}$}  \\
\cline{3-12}
\multicolumn{2}{c|}{} & \multicolumn{10}{c|}{$\theta_{12}$} & \\
\cline{3-13}
\multicolumn{2}{c|}{} & \multicolumn{11}{|c|}{{SftvRand, \ \ $N^A = 59$}}  \\
\cline{3-13}
\end{tabular}
}
\end{table}

Each table contains data for two series, one is upper secondary diagonal, another is lower secondary diagonal.
The first column stands for the values of $\theta_{21}$, and the second column represents the percentage of the excluded elements
for 2nd-1st case with corresponding values of coefficient.
The third line stands for the values of $\theta_{12}$, and the fourth line represents the percentage of the excluded elements
for 1st-2nd case with corresponding values of coefficient.
Another data of the upper table show the percentage of the excluded elements for 1st-2nd \& 2nd-1st case
(cell at the intersection of corresponding values of $\theta_{12}$ and $\theta_{21}$ varying from $0.1$ to $0.9$).
Interpretation of data of lower table is symmetric to the upper table.

Firstly considered the results of 1st-2nd and 2nd-1st cases.
As seen from Table~\ref{tab:S50_1-10_1-10_S50_1-20_1-20}, for series S50[1,10][1,10] and S50[1,20][1,20]
when $\theta_{12}=0.5$ or $\theta_{21}=0.5$ approximately $65 \%$ of elements of the set $A$ are excluded,
and when $\theta_{12}=0.8$ or $\theta_{21}=0.8$ approximately $10 \%$ of elements are remained.
On these series the results for $\theta_{12}=\theta_{21}$ are similar for both 1st-2nd and 2nd-1st cases.
Note that here the difference between the maximum and minimum
values of set $A$ on the 1st criterion  is almost identical
to the difference on the 2nd criterion for all instances.
\looseness=-1

Series SftvRand and S50[1,10][1,20] show different results: in the 1st-2nd case the reduction occurs ``almost uniformly'', i.e.
the value of $\theta_{12}$ is almost proportional to the degree of the reduction,
in the 2nd-1st case the condition $\theta_{21} = 0.5$ gives approximately $80 \%$ of the excluded elements.
Also, we note that   the percentage of the excluded elements
in the 2nd-1st case for $\theta_{21} = 0.5$ is approximately $1.5$ times
greater than the percentage of the excluded elements in the 1st-2nd case for $\theta_{12}=0.5$.
We suppose that this is due to the difference between the maximum and minimum
values of set $A$ on the 1st criterion  is at least $1.5$ times smaller
than the difference on the 2nd criterion for all instances.
\looseness=-1

Secondly we study the results of 1st-2nd \& 2nd-1st case.
Let us fix some value of $\theta_{12}$, when the 1st criterion is more important than the 2nd one,
and consider the values of percentage of the excluded elements varying $\theta_{21}$ in the feasible interval,
when the 2nd criterion is more important than the 1st one.
For example, if we put $\theta_{12} = 0.4$ for series S50[1,10][1,10], then varying $\theta_{21}$ from $0.1$ till $0.5$
we get the column $59.3$, $74.7$, $83.5$, $89.4$, $95.9$. Also we investigate vice versa situation,
when we fix some value of $\theta_{21}$, change values of $\theta_{12}$, and get the corresponding line.
According Table~\ref{tab:S50_1-10_1-10_S50_1-20_1-20} for both series S50[1,10][1,10] and S50[1,20][1,20] we conclude, that
the ratio between percentage of the excluded elements for some fixed value of $\theta_{12}$ and percentage, 
when we fix $\theta_{21}$ with the same value, is approximately equal to 1.
Thus the reduction occurs ``almost uniformly''.

From Table~\ref{tab:S50_1-10_1-20_SftvRand} we see, that for series S50[1,10][1,20]
the ratio of percentage of the excluded elements at the column with fixed $\theta_{12} = 0.1$ 
over percentage at the line with fixed $\theta_{21} = 0.1$
changes from $1.29$ till $1.07$. When we fix $\theta_{12} = 0.6$ and $\theta_{21} = 0.6$,
the ratio changes from $0.85$ till $0.98$. In total, the average ratio is greater than 1 for fixed
$0.1 \leqslant \theta_{12} = \theta_{21} \leqslant 0.2$,
and the average ratio is smaller than 1 for fixed $0.3 \leqslant \theta_{12} = \theta_{21} \leqslant 0.8$.

For series SftvRand we conclude that the ratio of percentage of the excluded elements
when we fix $\theta_{12} = 0.1$ to percentage when $\theta_{21} = 0.1$
changes from $1.39$ till $1.06$. When we fix $\theta_{12} = 0.6$ and $\theta_{21} = 0.6$,
the ratio changes from $0.84$ till $0.98$.
The average ratio for various fixed $\theta_{12}=\theta_{21}$ has the same tendency
as for the series S50[1,10][1,20].

Thus, we investigate symmetric situations:
1) when the 1st criterion is more important than the 2nd one with fixed $\theta_{12}$
and the 2nd criterion is more important than the 1st one with varied $\theta_{21}$;
2) when the 2nd criterion is more important than the 1st one with fixed $\theta_{21}$
and the 1st criterion is more important than the 2nd one with varied $\theta_{12}$.
We can state that more elements of the Pareto set are excluded
with a small fixed coefficient of relative importance (no more than $0.2$),
when we analyze situation 1) in comparison with situation 2) for both series S50[1,10][1,20] and SftvRand.
Otherwise, more elements of the Pareto set are eliminated,
when we fix coefficient of relative importance at a medium or a high value
(at least $0.3$) in situation 2) compared to situation 1).

As a result we conclude that in the 1st-2nd \& 2nd-1st case the difference between the maximum and minimum
values of set $A$ on criteria also influences on the degree of reduction.

The results of the experiment on series S50contr[1,2][1,2] confirm
the theoretical results of subsection~3.2 (Theorems~\ref{prop1_crit_theta}--\ref{thm3_crit_theta}).
In 1st-2nd and 2nd-1st cases we do not have a reduction, when coefficient of relative importance is lower than $0.5$,
and the reduction up to one element takes place if the otherwise inequality occurs.
In 1st-2nd \& 2nd-1st case, if $\theta_{12} \geqslant 0.5$ or $\theta_{21} \geqslant 0.5$, is valid, then
the reduction of the Pareto set approximation consists of only one element.
If both inequalities $\theta_{12} < 0.5$ and $\theta_{21} < 0.5$ hold, then the reduction does not occur.

The results for series with $n=12$ from the previous subsection are analogous.
Based on the results of the experiment we suppose that
the degree of the reduction of the Pareto set approximation will be similar
for the large-size problems with the same structure as the considered instances.
\looseness=-1

\textbf{6. Conclusion.}
We applied to the bi-ATSP  the axiomatic approach of the Pareto set reduction proposed by V.~Noghin.
For particular cases the series of ``quanta of information''
that guarantee the reduction of the Pareto set were identified.
An approximation of the Pareto set to the bi-ATSP was found by
a generational multi-objective genetic algorithm with new crossovers involving the Pareto relation.
The experimental evaluation indicated the degree of reduction of the Pareto set approximation
for various problem structures in the case of one and two ``quanta of information''.

Further research may include construction and analysis of new classes of multicriteria ATSP instances
with complex structures of the Pareto set. In particular, bi-ATSP with  objectives of different type
(for example, the first criterion is the sum of arc weights,
and the second one is the maximum arc weight).
It is also important to consider real-life ATSP instances with real-life decision maker
 and investigate effectiveness of the axiomatic approach for them.
Moreover, developing a faster implementation of the multi-objective genetic algorithm 
using more effective non-domination sorting, combination of exploitive and explorative crossovers,
 and local search procedures has great interest.

\

{\small \noindent \textbf{References}

\

1. Ausiello G., Crescenzi P., Gambosi G., Kann V., Marchetti-Spaccamela A.,
  Protasi M. \textit{Complexity and Approximation}. Berlin, Heidelberg, Springer-Verlag Publ., 1999, 524~p.

2. Ehrgott M. \textit{Multicriteria optimization}. Berlin, Heidelberg, Springer-Verlag Publ., 2005, 323~p.
\looseness=-1

3. Podinovskiy V.~V., Noghin V.~D. \textit{Pareto-optimal'nye resheniya
  mnogokriterial'nyh zadach} [\textit{Pareto-optimal solutions of multicriteria
  problems}]. Moscow, Fizmatlit Publ., 2007, 256~p. (In Russian)

4. Figueira J.~L., Greco S., Ehrgott M. \textit{Multiple criteria decision analysis:
  state of the art surveys}. New York, Springer-Verlag Publ., 2005, 1048~p.

5. Noghin V.~D. \textit{Reduction of the Pareto Set: An Axiomatic Approach}.
  Cham, Springer Intern. Publ., 2018, 232~p.

6. Klimova O.~N. The problem of the choice of optimal chemical composition of
  shipbuilding steel. \textit{Journal of Computer and Systems Sciences International},
  2007, vol.~46(6), pp. 903--907.

7. Noghin V.~D., Prasolov A.~V. The quantitative analysis of trade policy: a
  strategy in global competitive conflict.
  \textit{Intern. Journal of Business Continuity and Risk
  Management}, 2011, vol.~2(2), pp.~167--182.
\looseness=-1

8. Angel E., Bampis E., Gourv\'es L., Monnot J. (Non)-approximability for
  the multicriteria TSP(1,2). \textit{Fundamentals of Computation Theory 2005: 15th Intern. Symposium.
  Lecture Notes in Computer Science.} Vol.~3623. Lubeck, Germany, 2005. pp.~329--340.

9. Buzdalov M., Yakupov I., Stankevich A. Fast implementation of the
  steady-state NSGA-II algorithm for two dimensions based on incremental
  non-dominated sorting. \textit{Proceedings of the 2015 Annual conference on
  Genetic and Evolutionary Computation (GECCO-15)}. Madrid, Spain, 2015, pp.~647--654.
\looseness=-1

10. Deb K., Pratap A., Agarwal S., Meyarivan T. A fast and elitist
  multi-objective genetic algorithm: NSGA-II. \textit{IEEE Transactions on
  Evolutionary Computation}, 2002, vol.~6(2), pp.~182--197.
\looseness=-1

11. Li H., Zhang Q. Multiobjective optimization problems with complicated Pareto
  sets, MOEA/D and NSGA-II. \textit{IEEE Transactions on Evolutionary Computation},
  2009, vol.~13(2), pp.~284--302.
\looseness=-1

12. Yuan Y., Xu H., Wang B. An improved NSGA-III procedure for evolutionary
  many-objective optimization. \textit{Proceedings of the 2014 Annual conference on
  Genetic and Evolutionary Computation (GECCO-14)}. Vancouver, BC, Canada, 2014, pp.~661--668.

13. Zitzler E., Brockhoff D., Thiele L. The hypervolume indicator revisited: On
  the design of Pareto-compliant indicators via weighted integration.
  \textit{Proceedings of conference on Evolutionary Multi-Criterion Optimization,
  Lecture Notes in Computer Science}. Vol.~4403.
  Berlin, Springer Publ., 2007, pp.~862--876.

14. Zitzler E., Laumanns M., Thiele L. SPEA2: Improving the strength
  Pareto evolutionary algorithm. \textit{Evolutionary Methods
  for Design, Optimization and Control with Application to Industrial Problems:
  Proceedings of EUROGEN 2001 conference}.
  Athens, Greece, 2001, pp.~95--100.

15. Garcia-Martinez C., Cordon O., Herrera F. A taxonomy and an empirical
  analysis of multiple objective ant colony optimization algorithms for the
  bi-criteria TSP. \textit{European Journal of Operational Research}, 2007, vol.~180,
  pp.~116--148.

16. Psychas I.~D., Delimpasi E., Marinakis Y. Hybrid evolutionary algorithms for
  the multiobjective traveling salesman problem. \textit{Expert Systems with
  Applications}, 2015, vol.~42(22), pp.~8956--8970.

17. Reinelt G. TSPLIB -- a traveling salesman problem library. \textit{ORSA Journal on
  Computing}, 1991, vol.~3(4), pp.~376--384.

18. Zakharov A.~O., Kovalenko Yu.~V. Reduction of the Pareto set in bicriteria
  assymmetric traveling salesman problem.
  \textit{OPTA-2018. Communications in Computer and Information Science}. Vol.~871.
  Eds by A.~Eremeev, M.~Khachay, Y.~Kochetov, P.~Pardalos.
  Cham, Springer Intern. Publ., 2018, pp.~93-105.

19. Noghin V.~D. Reducing the Pareto set algorithm based on an arbitrary finite
  set of information ``quanta''. \textit{Scientific and Technical Information Processing},
  2014, vol.~41(5), pp.~309--313.

20. Klimova O.~N., Noghin V.~D. Using interdependent information on the relative
  importance of criteria in decision making. \textit{Computational Mathematics and Mathematical Physics},
  2006, vol.~46(12), pp.~2080--2091.

21. Noghin V.~D. Reducing the Pareto set based on set-point information.
  \textit{Scientific and Technical Information Processing}, 2011, vol.~38(6), pp.~435--439.

22. Zakharov A.~O. Pareto-set reduction using compound information of a closed
  type. \textit{Scientific and Technical Information Processing}, 2012, vol.~39(5), pp.~293--302.

23. Emelichev V.~A., Perepeliza V.~A. Complexity of vector optimization problems
  on graphs. \textit{Optimization: A Journal of Mathematical Programming and Operations
  Research}, 1991, vol.~22(6), pp.~906--918.
\looseness=-1

24. Vinogradskaya T.~M., Gaft M.~G. Tochnaya verhn'ya otzenka chisla
  nepodchinennyh reshenii v mnogokriterial'nyh zadachah [The least upper
  estimate for the number of nondominated solutions in multicriteria
  problems]. \textit{Avtomatica i Telemekhanica} [\textit{Automation and Remote Control}],
  1974, vol.~9, pp. 111--118. (In Russian)

25. Reeves C.~R. Genetic algorithms for the operations researcher. \textit{INFORMS Journal
  on Computing}, 1997, vol.~9(3), pp.~231--250.

26. Eremeev A.V., Kovalenko Y.V. Genetic algorithm with optimal recombination
  for the asymmetric travelling salesman problem. \textit{Large-Scale Scientific Computing 2017.
  Lecture Notes of Computer Science}, 2018, vol.~10665, pp.~341--349.

27. Whitley D., Starkweather T., McDaniel S., Mathias K. A comparison of
  genetic sequencing operators. \textit{Proceedings of the Fourth Intern.
  conference on Genetic Algorithms}. San Diego, California, USA, 1991, pp.~69--76.

28. Radcliffe N.~J. The algebra of genetic algorithms. \textit{Annals of Mathematics and
  Artificial Intelligence}, 1994, vol.~10(4), pp.~339--384.

29. Jaszkiewicz A., Zielniewicz P. Pareto memetic algorithm with path relinking
  for bi-objective traveling salesperson problem. \textit{European Journal of
  Operational Research}, 2009, vol.~193, pp.~885--890.
\looseness=-1

30. Kumar R., Singh P.~K. Pareto evolutionary algorithm hybridized with local search
  for bi-objective TSP.
  \textit{Hybrid Evolutionary Algorithms. Studies in Computational Intelligence.} Vol.~75.
  Eds by A.~Abraham, C.~Grosan, H.~Ishibuchi.
   Berlin, Heidelberg, Springer Publ., 2007, pp.~361--398.

31. Lust T., Teghem J. The Multiobjective traveling salesman problem: A survey
  and a new approach.
  \textit{Advances in Multi-objective Nature Inspired
  Computing. Studies in Computational Intelligence.} Vol.~272.
  Eds by C.~A.~Coello Coello, C.~Dhaenens, L.~Jourdan.
  Berlin, Heidelberg, Springer Publ., 2010, pp.~119--141.

32. {\it Multiobjective optimization library.} URL: http://home.ku.edu.tr/$\sim$moolibrary/ (accessed: 09.02.2018).
\looseness=-1

\end{document}